\documentclass{article}
\ifdefined\pdfminorversion\pdfminorversion=7\fi
\usepackage{iclr2027_conference}
\usepackage{times}
\usepackage{amsmath,amssymb,amsthm,mathtools}
\usepackage{booktabs,multirow,array}
\usepackage{graphicx}
\usepackage{float}
\usepackage{placeins}
\usepackage{needspace}
\usepackage[font=small,labelfont=bf]{caption}
\renewcommand{\arraystretch}{1.12}
\usepackage[table]{xcolor}
\usepackage{microtype}
\usepackage{xspace}
\usepackage{hyperref}
\usepackage{url}

\newcommand{\method}{\textsc{EESD}\xspace}
\newcommand{\eed}{\textsc{EED}\xspace}
\newcommand{\fix}{\textsc{Fix}\xspace}
\newcommand{\regress}{\textsc{Regress}\xspace}
\newcommand{\preserve}{\textsc{Preserve}\xspace}
\newcommand{\unresolved}{\textsc{Unresolved}\xspace}

\newcommand{\E}{\mathbb{E}}

\newcommand{\ind}{\mathbb{I}}
\newtheorem{proposition}{Proposition}

\definecolor{cardblue}{RGB}{235,244,252}
\definecolor{cardgreen}{RGB}{237,248,241}
\definecolor{cardpurple}{RGB}{245,240,250}
\definecolor{cardorange}{RGB}{252,245,235}
\definecolor{posgreen}{RGB}{226,242,231}
\definecolor{negred}{RGB}{249,231,231}
\definecolor{neutralgray}{RGB}{243,244,246}

\hypersetup{colorlinks=true,linkcolor=blue,citecolor=blue,urlcolor=blue,
  pdftitle={How Much Evidence Should a Coding Agent's Self-Correction Carry? Adaptive Dirichlet Evidence for Self-Distillation},
  pdfauthor={Yunbo Long, Guangya Hao, Yuhan Liu, Yiting Duan,
Longyan Tan, Yunchen Long, Hao Wu},
  pdfsubject={Preprint},
  pdfkeywords={coding agents, language-model agents, self-correction,
  recursive self-improvement, execution feedback, self-training,
  self-distillation, code generation, program repair, Dirichlet evidence}}

\title{How Much Evidence Should a Coding Agent's Self-Correction Carry?
Adaptive Dirichlet Evidence for Self-Distillation}

\author{
{\bfseries
Yunbo Long$^{1}$, Guangya Hao$^{1}$, Yuhan Liu$^{1}$, Yiting Duan$^{2}$}\\
{\bfseries
Longyan Tan$^{1}$, Yunchen Long$^{3}$, Hao Wu$^{2}$}\\[0.5em]
{\normalfont\small
$^{1}$University of Cambridge \quad
$^{2}$Western Sydney University}\\
{\normalfont\small
$^{3}$Guangdong University of Technology}
}
\date{}
\iclrfinalcopy

\begin{document}
\maketitle
\lhead{Preprint}
\raggedbottom

\begin{abstract}

Execution feedback lets coding agents revise programs and learn from their own corrections. A correction's learning weight should reflect both the transitions supported by its executions and the amount of evidence behind that support. We introduce \emph{Effective-Evidence Self-Distillation} (\method), which represents these quantities separately. Normalized execution relevance determines relative transition support and an effective pseudo-count mass; a Dirichlet posterior then produces an uncertainty-penalized weight for KL-anchored correction learning. Under a symmetric prior, changing mass preserves category ordering, and effective mass yields a supervised coefficient bounded by its matched fixed-mass counterpart. Across four model--domain history sweeps, increasing visible observations from one to eight reduces future-outcome NLL by $55.0$--$59.3\%$. At eight observations, effective mass achieves lower NLL than fixed mass in all four comparisons. In the primary matched DeepSeek/RunBugRun study, argmax predictions agree on all 3,000 examples, with the largest NLL gain under concentrated relevance. After one correction-learning round, DeepSeek/CodeARC all-tests Pass@1 increases from $15.0\%$ to $20.4\%$, with a paired 95\% source-bootstrap interval of $[+2.8,+8.0]$ percentage points. The twelve-setting downstream evaluation establishes the model--domain scope of this update. These results show how separating evidence support from evidence mass changes probability estimation and correction learning in coding agents.

\end{abstract}

\section{Introduction}

Coding agents use language models to generate and revise programs from execution feedback, including compiler errors and unit-test outcomes. This supports self-correction in code generation and program repair. Self-Refine and Reflexion reuse model-generated feedback or reflection at inference time \citep{madaan2023selfrefine,shinn2023reflexion}; Self-Debugging revises programs after observing execution behavior \citep{chen2024selfdebug}; and execution-guided repair, search, and verification systems use tests to select or refine candidate programs \citep{ni2023lever,tang2024rex,li2025codeprm,li2025sstar,tao2026execcritic}. These methods use feedback to improve the next candidate program.

Self-training also requires a decision about how strongly to learn from each trajectory. Correctness filtering, reward weighting, and confidence-aware training assess the quality of generated behavior \citep{jiang2025importance,jang2025corepo,yang2025dnpo}. Execution evidence supplies two further quantities: \emph{which before/after transitions the observations support} and \emph{how much evidence supports those transitions}. Representing both quantities makes the correction weight sensitive to evidence concentration even when relative support stays fixed.

\begin{figure}[t]
\centering
\includegraphics[width=\linewidth]{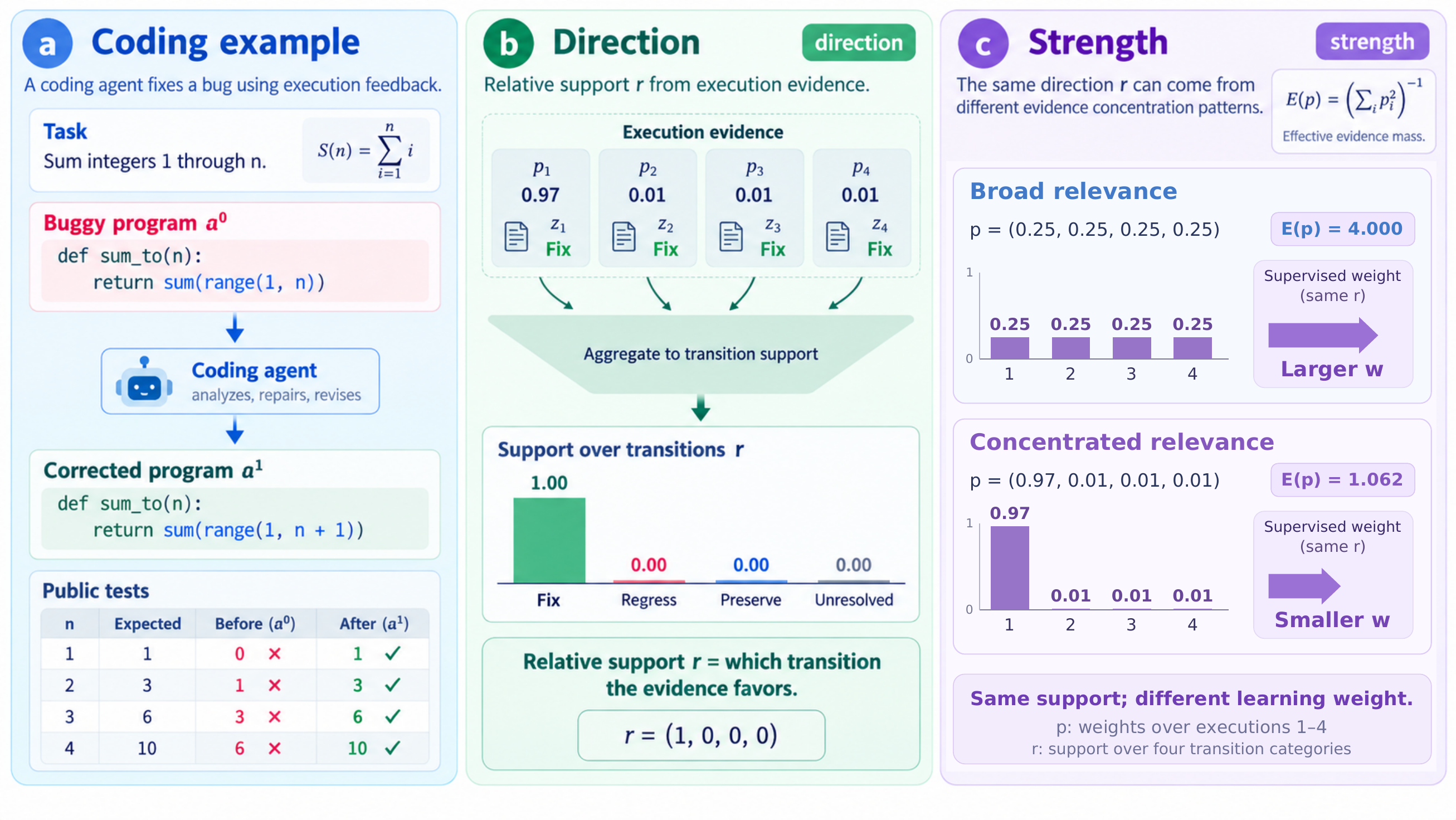}
\caption{\textbf{Direction and strength in a coding agent's self-correction.}
(a) Fixing an off-by-one range error changes four public executions from fail to pass.
(b) Aggregating execution relevance by transition gives $r=(1,0,0,0)$: all support favors \fix.
(c) The same transitions under uniform and concentrated relevance give $E(p)=4.000$ and $1.062$; at fixed prior and utility settings, they yield different supervised weights.
The $p$ bars index executions, whereas the $r$ bars index transition categories. The relevance vectors define an analytical example at fixed transition support.}
\label{fig:problem}
\end{figure}

Figure~\ref{fig:problem} illustrates the distinction through an off-by-one repair. Extending a Python range endpoint makes all four public executions pass, so relative support is $r=(1,0,0,0)$ under either relevance pattern. Uniform relevance assigns comparable weight to every execution and gives $E(p)=4$; concentrated relevance $p=(0.97,0.01,0.01,0.01)$ gives $E(p)\approx1.062$. A fixed-mass control restores total mass to four in both cases. \method instead lets relevance concentration change the supervised weight while retaining the same correction target. The analytical relevance vectors isolate the effect of concentration at fixed transition support.

The problem becomes more consequential when corrections are written back into model parameters. A mistaken inference-time correction affects one answer; an over-weighted training correction changes the policy that will generate future answers and future training data, a feedback pattern relevant to recursive self-improvement (RSI) and self-evolving agents \citep{yin2025godel,ranaldi2026evolving,xiao2026socraticswe,li2026codeskill}. We study the correction-to-update step within this feedback process: \emph{given a self-generated correction, how much should the next policy learn from its supporting execution evidence?}

We introduce \emph{Effective-Evidence Self-Distillation} (\method), a posterior-based rule for weighting correction learning. Given nonnegative relevance scores $s_i$, \method normalizes them into $p$. Relative support $r$ describes \emph{what} transition categories the observations favor, while
$E(p)=(\sum_i p_i^2)^{-1}$ describes \emph{how much} pseudo-count mass that support contributes. A Dirichlet posterior produces an uncertainty-penalized correction utility; its positive part scales correction cross-entropy while a KL term anchors the student to the entering policy. The upstream correction generator and relevance estimator remain modular: the contribution is how their execution record is converted into a learning coefficient.
Our contributions are:
\begin{enumerate}
\item We formulate correction-level \emph{direction--strength separation}: distinguishing relative execution support from the total evidence mass assigned to that support.
\item We introduce an effective-evidence Dirichlet estimator (\eed), which converts relevance concentration into adaptive Dirichlet pseudo-count mass, and connect its posterior utility to KL-anchored LoRA self-distillation. We characterize count recovery, ordering invariance, and the resulting control of the supervised coefficient.
\item We evaluate the construction through visible-history scaling, a matched-mass mechanism study with fixed argmax, and one-round correction-based self-training followed by fresh code generation across multiple models and domains.
\end{enumerate}

\FloatBarrier
\section{Related Work}
\label{sec:related}

Table~\ref{tab:positioning} positions \method relative to representative
self-correction and self-training approaches. The central distinction
is between improving an answer, assessing its suitability for learning,
and determining how much execution evidence supports the resulting
update.

\begin{table}[H]
\centering
\caption{\textbf{Positioning by the role of feedback.} $\checkmark$: explicit use; --: absent in the cited formulation; $\circ$: varies within a grouped family. Generator updates refer to the code-generating policy. Adaptive training includes filtering, preferences, and reward weighting. The final three columns describe correction-level support--mass construction. Representative methods are discussed in Section~\ref{sec:related}.}
\footnotesize
\setlength{\tabcolsep}{3pt}
\renewcommand{\arraystretch}{1.15}
\begin{tabular*}{\linewidth}{@{\extracolsep{\fill}}>{\raggedright\arraybackslash}p{3.0cm}ccccccc@{}}
\toprule
Method & \shortstack{Inference\\revision /\\selection} & \shortstack{Generator\\update} & \shortstack{Execution\\feedback} & \shortstack{Adaptive\\training} & \shortstack{Transition\\support\\$r$} & \shortstack{Effective\\mass\\$E(p)$} & \shortstack{Support /\\mass\\separation} \\
\midrule
Self-Refine & $\checkmark$ & -- & -- & -- & -- & -- & -- \\
Reflexion (coding) & $\checkmark$ & -- & $\checkmark$ & -- & -- & -- & -- \\
Self-Debugging & $\checkmark$ & -- & $\checkmark$ & -- & -- & -- & -- \\
Execution-guided verification / reranking & $\checkmark$ & -- & $\checkmark$ & -- & -- & -- & -- \\
Importance-weighted self-training & -- & $\checkmark$ & -- & $\checkmark$ & -- & -- & -- \\
Confidence-aware self-training & -- & $\checkmark$ & -- & $\checkmark$ & -- & -- & -- \\
Reward / process-weighted learning & $\circ$ & $\circ$ & $\checkmark$ & $\circ$ & -- & -- & -- \\
\midrule
\method (ours) & -- & $\checkmark$ & $\checkmark$ & $\checkmark$ & $\checkmark$ & $\checkmark$ & $\checkmark$ \\
\bottomrule
\end{tabular*}

\label{tab:positioning}
\end{table}

\paragraph{Self-correction and execution-guided repair.}

Self-Refine iteratively revises outputs using model-generated critiques, and Reflexion reuses verbal feedback to guide later attempts \citep{madaan2023selfrefine,shinn2023reflexion}. Reflexion's coding setting and Self-Debugging use execution feedback to revise programs \citep{shinn2023reflexion,chen2024selfdebug}. Execution-guided verification and reranking evaluate candidate programs with test outcomes \citep{ni2023lever,li2025sstar}. REx uses Bayesian beliefs to allocate repair-search effort, while ExecCritic couples testing with repair-agent learning \citep{tang2024rex,tao2026execcritic}. LEVER trains an auxiliary verifier for candidate selection \citep{ni2023lever}. \method operates on the correction record produced by such upstream procedures and determines its contribution to a generator update.

\paragraph{Learning from self-generated corrections.}

Correctness filtering, importance weighting, and confidence-aware self-training select or weight generated trajectories using outcome and model signals \citep{jiang2025importance,jang2025corepo}. CodePRM learns a process evaluator from execution feedback, and iterative preference learning updates models from self-generated data \citep{li2025codeprm,yang2025dnpo}. Self-evolving systems also reuse experience across updates or skill construction \citep{yin2025godel,ranaldi2026evolving,xiao2026socraticswe,li2026codeskill}. \method adds a correction-level evidence representation: relative transition support and the pseudo-count mass assigned to that support jointly determine the training weight.

\paragraph{Evidential uncertainty and effective sample size.}

Dirichlet models separate categorical probabilities from distributional concentration and support evidential uncertainty modeling \citep{sensoy2018evidential}. The inverse squared concentration of normalized weights is an established effective-sample-size functional \citep{martino2017ess}. \method combines these tools at the level of an execution record. Relative transition support $r$ allocates pseudo-counts across categories, effective mass $E(p)$ sets their total, and posterior utility sets the supervised coefficient in KL-anchored self-distillation. This execution-to-learning construction links evidence concentration directly to the strength of a correction update.

\FloatBarrier
\section{Effective-Evidence Self-Distillation}
\label{sec:method}

\method converts a coding agent's execution feedback into a correction-specific training weight. The key is to separate \emph{relative transition support} from \emph{total evidence mass}, then use both to determine how strongly to learn from a correction.

\begin{figure}[H]
\centering
\includegraphics[width=\linewidth]{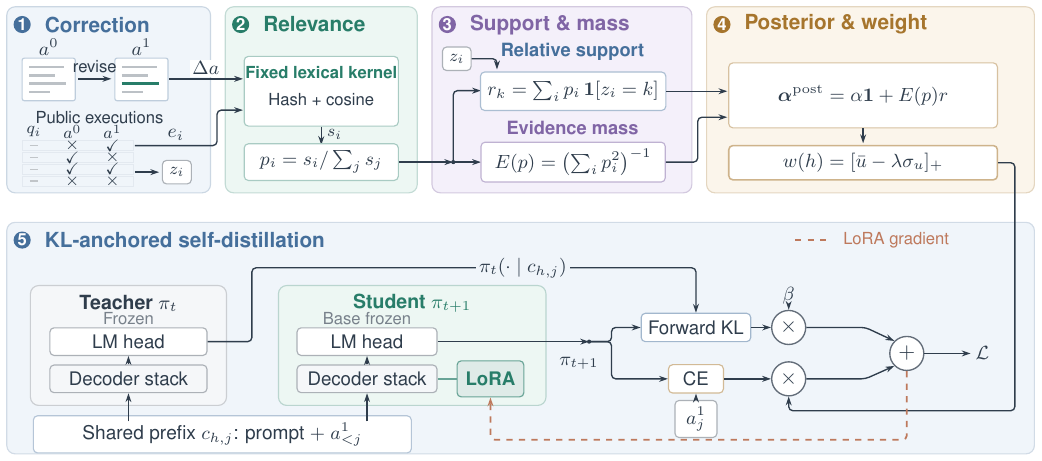}
\caption{\textbf{Effective-Evidence Self-Distillation.} The public correction record yields $p$ and transition labels $z_i$. Parallel support and mass summaries form a pseudo-count posterior and weight $w$. Teacher and student consume the same prompt and past correction tokens; the current target $a^1_j$ enters CE only. Teacher probabilities supply the forward-KL reference. The weight $w$ scales CE. The dashed coral arrow updates LoRA parameters only. Code and decoder blocks schematically represent the correction record and frozen model bases.}
\label{fig:method}
\end{figure}

\subsection{Correction evidence}
\label{sec:problem}

For a programming task $x$, a coding agent with policy $\pi_t$ produces an initial program $a^0$, and an upstream repair procedure returns a correction $a^1$. Both programs are evaluated on the same $n\ge1$ public executions $\{q_i\}_{i=1}^{n}$, yielding before/after outcomes $(o_i^0,o_i^1)$. Let $h$ denote this correction record. Each outcome pair defines a transition
\begin{equation}
z_i\in\{\fix,\regress,\preserve,\unresolved\},
\end{equation}
corresponding to fail$\rightarrow$pass, pass$\rightarrow$fail, pass$\rightarrow$pass, and fail$\rightarrow$fail, respectively.

Each execution receives a relevance score $s_i\ge0$, with $\sum_i s_i>0$. Define
\begin{align}
p_i &= \frac{s_i}{\sum_j s_j},
\label{eq:normalized-relevance}\\
r_k &= \sum_i p_i\ind[z_i=k].
\label{eq:relative-support}
\end{align}
The four-vector $r$ describes which transitions the observations support. Direction denotes the ordering of transition categories. Our implementation obtains $s_i$ from a frozen lexical kernel comparing the code change with public execution descriptors; Appendix~\ref{app:architecture} gives the details.

\Needspace{11\baselineskip}
\subsection{From effective evidence to correction utility}

Our fixed-mass control uses $c_k^{\mathrm{fixed}}=nr_k$, preserving relative support while assigning every record of length $n$ the same total mass. \eed replaces this nominal mass with
\begin{align}
E(p) &= \left(\sum_i p_i^2\right)^{-1},
\label{eq:effective-evidence}\\
\alpha_k^{\mathrm{post}} &= \alpha+E(p)r_k,
\qquad \alpha>0,
\label{eq:posterior}
\end{align}
and defines $\theta\mid h\sim\mathrm{Dir}(\boldsymbol\alpha^{\mathrm{post}})$. Thus $r$ allocates pseudo-counts across transitions, while $E(p)$ sets their total mass. We have $1\le E(p)\le n$; uniform relevance gives $E(p)p_i=1$ and recovers ordinary counts. The posterior is conditional on the relevance weights, and its pseudo-count mass summarizes their concentration.

To reward fixes and penalize regressions, set $v=(1,-1,0,0)$. Writing $\alpha_0=\sum_k\alpha_k^{\mathrm{post}}$ and $\mu_k=\alpha_k^{\mathrm{post}}/\alpha_0$, the posterior utility moments are
\begin{align}
\bar u &= \E[v^\top\theta]
=\mu_{\fix}-\mu_{\regress},
\label{eq:utility-mean}\\
\sigma_u^2 &=
\frac{\mu_{\fix}+\mu_{\regress}-\bar u^2}{\alpha_0+1}.
\label{eq:utility-variance}
\end{align}
The correction weight is
\begin{equation}
w(h)=\left[\bar u-\lambda\sigma_u\right]_+,
\qquad \lambda\ge0,
\label{eq:utility}
\end{equation}
where $[a]_+=\max(a,0)$. The uncertainty penalty reduces the supervised contribution of corrections with less decisive posterior utility; nonpositive utility yields zero supervised weight. The score combines expected utility with a posterior standard-deviation penalty.

For the all-\fix example in Figure~\ref{fig:problem}, $\alpha=0.1$ and $\lambda=0.5$ give weights $0.832$ and $0.542$ for uniform and concentrated relevance. The matched fixed-mass control assigns $0.832$ to both. Appendix~\ref{app:coding-illustration} gives the calculation. The example holds correction targets and KL coefficients fixed.

\subsection{KL-anchored self-distillation}

The student learns from $a^1$ while a frozen copy of $\pi_t$ supplies a forward-KL anchor:
\begin{equation}
\mathcal L_{\method}(h)
=
w(h)\mathcal L_{\mathrm{CE}}(h)
+\beta D_{\mathrm{KL}}^{(h)}(\pi_t\Vert\pi_{t+1}).
\label{eq:eesd-loss}
\end{equation}
Here $\mathcal L_{\mathrm{CE}}$ is correction cross-entropy, and $D_{\mathrm{KL}}^{(h)}$ is forward KL averaged over the same response-token prefixes. Only the supervised term is weighted by $w(h)$. We train LoRA parameters with the quantized base frozen; token-level definitions and fixed hyperparameters are in Appendix~\ref{app:architecture}.

\label{sec:theory}
\begin{proposition}[Outcome ordering and correction weight]
\label{prop:direction-strength}
Fix $r$, a symmetric prior $\alpha>0$, $\lambda\ge0$, and $v=(1,-1,0,0)$. Replace $E(p)$ by any mass $\tau>0$ in Equations~\ref{eq:posterior}--\ref{eq:utility}. The posterior mean satisfies
\begin{equation}
\mu_k(\tau)
=\gamma(\tau)r_k+\frac{1-\gamma(\tau)}{4},
\qquad
\gamma(\tau)=\frac{\tau}{4\alpha+\tau}.
\label{eq:shrinkage}
\end{equation}
Consequently, category ordering and the argmax set are invariant to $\tau$. Moreover, the correction weight $w_\tau$ is nondecreasing in $\tau$, so
\begin{equation}
w_{E(p)}\le w_n.
\label{eq:weight-attenuation}
\end{equation}
\end{proposition}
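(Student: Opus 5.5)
The plan is to substitute a generic mass $\tau$ into Equations~\ref{eq:posterior}--\ref{eq:utility}, reduce everything to closed forms in the single scalar $\gamma(\tau)$, and then read off the three claims in turn. First, since $\sum_k r_k=1$ (the $p_i$ sum to one and each execution has exactly one transition label), the posterior mass is $\alpha_0=4\alpha+\tau$. Hence
\[
\mu_k=\frac{\alpha+\tau r_k}{4\alpha+\tau}=\gamma r_k+\frac{1-\gamma}{4},
\]
which is Equation~\ref{eq:shrinkage}. Because $\gamma(\tau)\in(0,1)$ for every $\tau>0$, the map $r_k\mapsto\mu_k$ is a strictly increasing affine map shared by all four categories. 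So $\mu_k>\mu_l\iff r_k>r_l$, ties are preserved, and the argmax set equals that of $r$ for every $\tau$.

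For the weight, set $d=r_{\fix}-r_{\regress}$ and $s=r_{\fix}+r_{\regress}$. Then $\bar u=\gamma d$, and
\[
\sigma_u^2=\frac{\gamma s+(1-\gamma)/2-\gamma^2d^2}{4\alpha+\tau+1}.
\]
I would split into two cases on the sign of $d$.
\begin{itemize}
\item If $d\le0$, then $\bar u\le0$, so $\bar u-\lambda\sigma_u\le0$ and $w_\tau=0$ for every $\tau$. This is trivially nondecreasing.
\item If $d>0$, then $\bar u=\gamma d$ is strictly increasing in $\tau$, since $\gamma'(\tau)=4\alpha/(4\alpha+\tau)^2>0$. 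Because $[\cdot]_+$ is nondecreasing, it suffices to show that $g(\tau)=\bar u-\lambda\sigma_u$ is nondecreasing on the region where $g>0$. Outside that region $w=0$, and a continuous function cannot drop from positive to zero without passing through that region.
\end{itemize}

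The main obstacle is the sign of $d\sigma_u/d\tau$. The denominator $4\alpha+\tau+1$ grows in $\tau$, but the numerator $N(\gamma)=\gamma s+(1-\gamma)/2-\gamma^2d^2$ can increase in $\gamma$ when $s>1/2$. So $\sigma_u$ need not be obviously monotone.

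My first attempt would be to show directly that $\sigma_u^2$ is nonincreasing. I would write $4\alpha+\tau+1=(4\alpha+1-\gamma)/(1-\gamma)$, so that $\sigma_u^2=N(\gamma)(1-\gamma)/(4\alpha+1-\gamma)$. I would then differentiate the logarithm in $\gamma$ and check the sign of
\[
\frac{N'}{N}-\frac{1}{1-\gamma}+\frac{1}{4\alpha+1-\gamma}.
\]
This reduces to a polynomial inequality in $(\gamma,s,d)$ on the feasible set $0\le d\le s\le1$, and I would verify it at the extreme points, e.g.\ $s=1$ or $d=s$.

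If that inequality fails somewhere, the fallback is weaker but sufficient. On $\{g>0\}$ we have $\lambda<\gamma d/\sigma_u$, so it is enough to show
\[
\frac{d\,\bar u}{d\tau}\ \ge\ \frac{\gamma d}{\sigma_u}\,\frac{d\sigma_u}{d\tau},
\]
i.e.\ that $\bar u/\sigma_u$ is nondecreasing in $\tau$. Equivalently, $\gamma^2d^2/\sigma_u^2$ should be nondecreasing, and this ratio is an explicit rational function of $\gamma$ whose monotonicity should be easier to check.

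Finally, the attenuation claim follows from monotonicity together with the bound $1\le E(p)\le n$ stated after Equation~\ref{eq:posterior}. That bound comes from $\sum_ip_i^2\ge1/n$ by Cauchy--Schwarz. Taking $\tau=E(p)\le n$ gives $w_{E(p)}\le w_n$. At $\tau=n$ the posterior coincides with the fixed-mass control $c^{\mathrm{fixed}}=nr$, so $w_n$ is exactly the matched fixed-mass counterpart of the effective-mass weight.
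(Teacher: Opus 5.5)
Your treatment of the ordering claim, the case $d\le 0$, and the final step via $1\le E(p)\le n$ is correct and matches the paper. The gap is in the case $d>0$.

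Your primary plan, showing that $\sigma_u^2$ is nonincreasing, is false on the stated range $\tau>0$. As $\gamma\to0$, the log-derivative you propose to sign tends to $2s-2+1/(4\alpha+1)$. This is strictly positive at exactly the extreme point $s=1$ you intended to check. Concretely, take the all-\fix record $r=(1,0,0,0)$ with $\alpha=0.1$: one gets $\sigma_u^2\approx0.363$ at $\tau=0.01$ but $\sigma_u^2\approx0.373$ at $\tau=0.1$. So the penalty term genuinely grows with mass on part of the range, and the fallback is not optional.

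Your fallback reduces to the right statement: it is precisely the paper's key step, that $T_\tau=\bar u/\sigma_u$ is increasing when $d>0$. But you stop at ``should be easier to check'', and that verification is the whole substance of the proof. It closes quickly. In your variables, $T_\tau^2=d^2\,(4\alpha+\tau+1)\,\gamma^2/N(\gamma)$. Here $4\alpha+\tau+1=1+4\alpha/(1-\gamma)$ is increasing. Also $N(\gamma)/\gamma^2=(s-\tfrac12)/\gamma+1/(2\gamma^2)-d^2$ is positive, with derivative $-[(s-\tfrac12)\gamma+1]/\gamma^3<0$.

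The paper does the same directly in $\tau$, dividing numerator and denominator by $\tau^2$:
\[
T_\tau^2=\frac{d^2(4\alpha+\tau+1)}{s-d^2+(2\alpha+4\alpha s)/\tau+8\alpha^2/\tau^2}.
\]
Since $|d|\le s\le1$, we have $s-d^2\ge0$, so the numerator increases and the positive denominator decreases.

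Finally, you can drop your derivative-on-$\{g>0\}$ plus continuity argument. Write $w_\tau=\bar u\,[1-\lambda/T_\tau]_+$, a product of two nonnegative nondecreasing functions. It is therefore nondecreasing, with no differentiability needed.
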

At fixed support, effective mass attenuates the supervised coefficient relative to nominal mass while preserving category ordering. Its effect on log loss depends on the observed category. Appendix~\ref{app:proofs} gives the proofs and the outcome-dependent log-loss condition.

\FloatBarrier
\section{Experiments}
\label{sec:experiments}

We evaluate three questions. \textbf{RQ1} examines whether additional visible executions improve outcome prediction. \textbf{RQ2} isolates how effective mass changes predictive probabilities at fixed relative support. \textbf{RQ3} tests the complete correction-learning update through fresh program generation. Figures~\ref{fig:history-scaling} and~\ref{fig:performance-summary} organize these measurements, with full numerical reports in the appendix.

\subsection{Experimental setup}

\paragraph{Tasks and models.}
We use RunBugRun and CodeARC, together with APPS Replay and CodeContests Replay \citep{runbugrun,codearc}. RunBugRun contains executable buggy programs; CodeARC tests program synthesis from behavioral observations. The replay domains are controlled synthesis extensions evaluated on the recorded source populations and execution budgets. CodeARC uses the fixed-source execution protocol specified in Appendix~\ref{app:protocol}; the original benchmark uses interactive differential testing \citep{codearc}. The downstream matrix covers Qwen2.5-Coder-7B-Instruct, DeepSeek-Coder-6.7B-Instruct, and Gemma-3-4B-it, with 500 primary sources per model--domain cell.

\paragraph{Measurements and comparisons.}
The prediction studies measure future-outcome negative log-likelihood (NLL) and categorical accuracy. The history sweep compares fixed mass $n$ with effective mass $E(p)$ as visible history grows. A separate matched comparison holds relative support and predictor parameters fixed, changing only the pseudo-count mass. Downstream evaluation generates one fresh candidate per source after one update; all-tests Pass@1 requires passing all ten evaluator tests. It compares full \method (\texttt{eesd\_full}) with the unchanged entering policy (\texttt{no\_update}). The paired effect, in percentage points, is
\begin{equation}
\Delta_{\mathrm{pp}}=100\bigl[S(\texttt{eesd\_full})-S(\texttt{no\_update})\bigr],
\label{eq:downstream-delta}
\end{equation}
where $S$ is the fraction of solved sources. Paired 95\% intervals use 10,000 whole-source bootstrap draws and quantify source-level uncertainty conditional on the evaluated policies. Evidence scoring uses public executions; hidden outcomes are reserved for evaluation. Fresh generation produces a new completion for each recorded source. Protocol and replication details are in Appendix~\ref{app:protocol}.

\paragraph{Training and hardware.}
We adapt LoRA parameters on a frozen 4-bit NF4 base with bfloat16 computation. The correction scorer uses $\alpha=0.1$ and $\lambda=0.5$; the forward-KL coefficient is $\beta=0.03$. Experiments use NVIDIA RTX A6000 hardware with 48\,GB nominal memory per GPU. Evidence weights are computed from public records before student fitting, and the frozen entering policy supplies the KL reference. Appendix~\ref{app:architecture} specifies the relevance kernel, response-token objective, and optimization settings.

\subsection{RQ1: Does additional visible evidence improve prediction?}
\label{sec:rq1}
\label{sec:history-scaling}

We first ask whether the available executions carry useful predictive information. Figure~\ref{fig:history-scaling} varies visible history size over $n\in\{1,2,4,8\}$ for DeepSeek and Qwen on RunBugRun and CodeARC. Here $n$ is the number of visible observations. The sweep measures the predictive value of a longer execution history.

\paragraph{Prediction improves with longer histories.}
EED NLL decreases at every measured step in all four model--domain cells. On RunBugRun, the endpoint changes are $0.4696\rightarrow0.2114$ for DeepSeek and $0.4737\rightarrow0.2113$ for Qwen. On CodeARC, they are $0.7103\rightarrow0.2892$ and $0.7785\rightarrow0.3415$. Table~\ref{tab:history-summary} expresses these changes as relative NLL reductions of $55.0$--$59.3\%$. Categorical accuracy improves as well: by $2.5$ and $1.9$ percentage points on RunBugRun, and by $5.5$ and $6.1$ points on CodeARC, respectively. Additional history therefore benefits both the probability assigned to the observed outcome and the categorical prediction. RQ3 evaluates the resulting policy update.

\paragraph{Effective mass adds a distinct comparison.}
Fixed-mass prediction also benefits from additional observations. At $n=8$, EED further reduces NLL relative to Fixed in all four cells, as summarized in Figure~\ref{fig:performance-summary}(a): $0.0046$ and $0.0075$ on RunBugRun, and $0.0113$ and $0.0024$ on CodeARC, for DeepSeek and Qwen respectively. All four values are absolute NLL differences. At $n=1$, the two constructions coincide because $p_1=1$ and $E(p)=n=1$; differences at shorter histories are generally small. Figure~\ref{fig:history-scaling} and Appendix~\ref{app:history-size} retain the complete trajectories. The history sweep varies the available observations; RQ2 holds relative support fixed to isolate the mass intervention.

\begin{table}[H]
\centering
\small
\caption{\textbf{Endpoint summary of the EED history sweep.} NLL reduction and accuracy change compare $n=1$ with $n=8$; $\Delta\mathrm{NLL}_{8}=\mathrm{NLL}_{\mathrm{Fixed}}-\mathrm{NLL}_{\mathrm{EED}}$ at $n=8$. The reductions are calculated from the reported rounded measurements.}
\setlength{\tabcolsep}{4pt}
\begin{tabular}{llcccc}
\toprule
Benchmark & Backbone & \shortstack{EED NLL\\$1\!\rightarrow\!8$} & \shortstack{Reduction\\(\%)} & \shortstack{Acc. change\\(pp)} & $\Delta\mathrm{NLL}_{8}$ $\uparrow$ \\
\midrule
RunBugRun & DeepSeek 6.7B & $0.4696\!\rightarrow\!0.2114$ & 55.0 & +2.5 & +0.0046 \\
RunBugRun & Qwen2.5 7B & $0.4737\!\rightarrow\!0.2113$ & 55.4 & +1.9 & +0.0075 \\
CodeARC & DeepSeek 6.7B & $0.7103\!\rightarrow\!0.2892$ & 59.3 & +5.5 & +0.0113 \\
CodeARC & Qwen2.5 7B & $0.7785\!\rightarrow\!0.3415$ & 56.1 & +6.1 & +0.0024 \\
\bottomrule
\end{tabular}
\label{tab:history-summary}
\end{table}

\begin{figure}[H]
\centering
\includegraphics[width=1\linewidth]{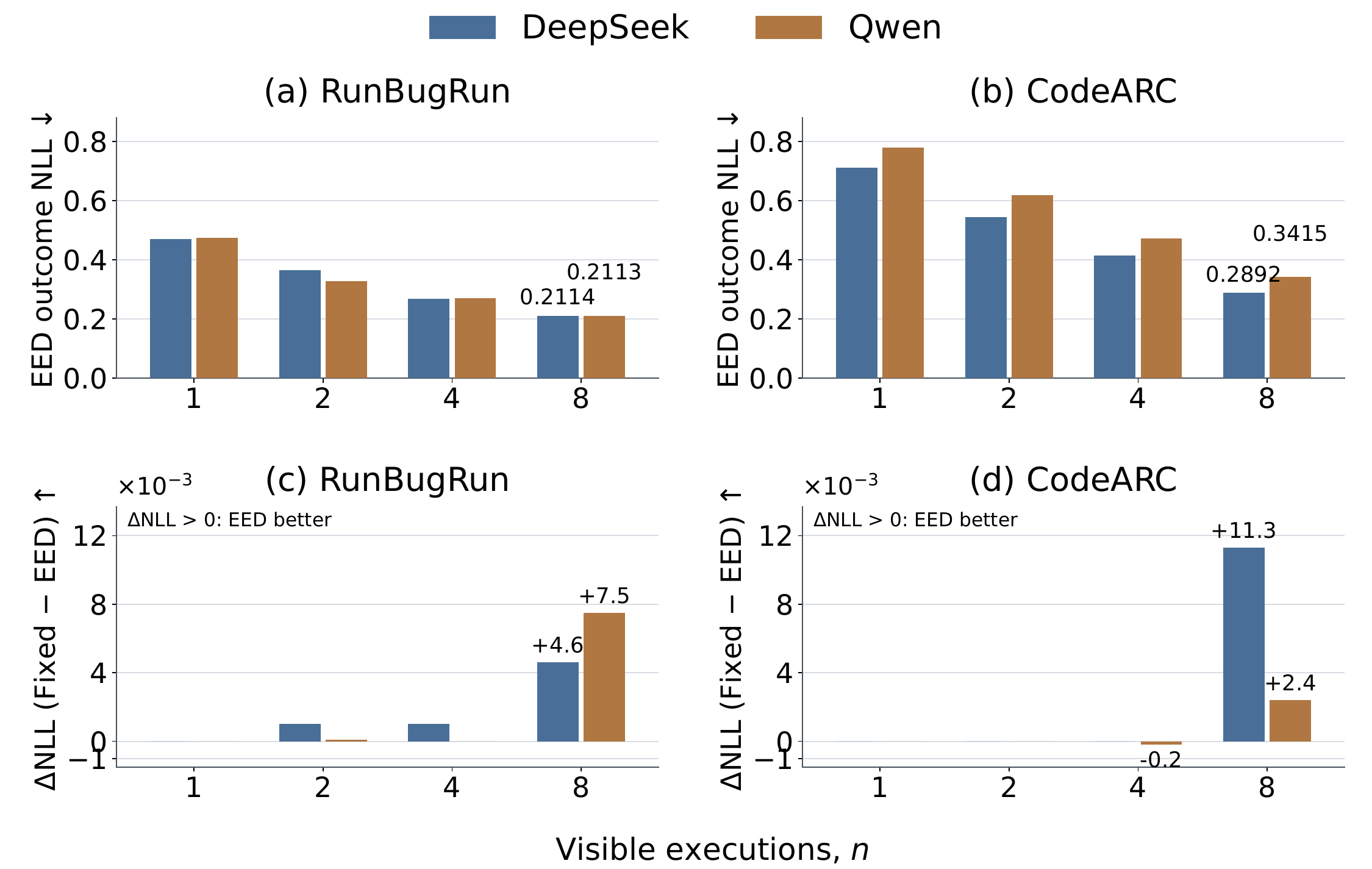}
\caption{\textbf{History size and effective evidence answer different questions.} (a,b) EED future-outcome NLL: lower is better. (c,d) $\Delta\mathrm{NLL}=\mathrm{NLL}_{\mathrm{Fixed}}-\mathrm{NLL}_{\mathrm{EED}}$: higher is better, in units of $10^{-3}$. All panels use $n=1,2,4,8$ visible observations. Differences are calculated from the four-decimal measurements in Table~\ref{tab:history-full}. The matched-parameter comparison is reported separately in Section~\ref{sec:rq2}.}
\label{fig:history-scaling}
\end{figure}

\subsection{RQ2: Does effective evidence change strength without changing direction?}
\label{sec:rq2}

The primary matched comparison uses DeepSeek/RunBugRun with the same relative support and effective-arm hyperparameters for both predictors. Only total pseudo-count mass changes. Their categorical argmax predictions agree on all 3,000 future-outcome examples, while effective mass reduces aggregate NLL by $0.00609$ with a 95\% source-bootstrap interval of $[0.00211,0.01075]$. Thus better probability estimates do not require changing the selected category. The ordering argument in Proposition~\ref{prop:direction-strength} applies to any categorical predictor with a symmetric prior, as detailed in Appendix~\ref{app:logloss}; the four-transition utility is the separate training construction.

\paragraph{The largest gain occurs under concentrated relevance.}
Table~\ref{tab:concentration} and Figure~\ref{fig:performance-summary}(b) divide the matched comparison into three effective-mass bins. The NLL reduction is $0.0159$ for $1\le E(p)<2$, $0.0077$ for $2\le E(p)<3$, and $0.0018$ for $3\le E(p)\le4$. The corresponding mean masses retain $37.8\%$, $63.0\%$, and $92.0\%$ of the nominal count of four (Appendix~\ref{app:derived-summaries}). In this comparison, the largest predictive benefit is associated with the largest reduction in pseudo-count mass. When support is already broadly distributed, effective and nominal mass are closer and the observed difference is smaller.

Changing mass adjusts shrinkage toward the symmetric prior at fixed observations and relative support. The favored category remains fixed while its assigned probability changes. The reported mass fractions quantify this pseudo-count intervention.

\begin{table}[htbp]
\centering
\small
\caption{\textbf{Concentration-stratified matched-mass comparison.} Primary DeepSeek/RunBugRun comparison. Gain is fixed-mass NLL minus EED NLL. Both predictors agree in argmax on all 3,000 examples.}
\setlength{\tabcolsep}{7pt}
\begin{tabular}{lccc}
\toprule
Effective-mass bin & Examples & Mean $E(p)$ & NLL gain $\uparrow$ \\
\midrule
$[1,2)$ & 568 & 1.511 & \textbf{+0.0159} \\
$[2,3)$ & 815 & 2.521 & +0.0077 \\
$[3,4]$ & 1617 & 3.681 & +0.0018 \\
\bottomrule
\end{tabular}
\label{tab:concentration}
\end{table}

\paragraph{From probability estimates to learning.}
The matched study isolates the evidence estimator. For training, the posterior is further mapped to a correction-specific utility and supervised coefficient. Proposition~\ref{prop:direction-strength} characterizes this mapping; RQ3 examines the complete resulting update. The other recorded mechanism repetitions and their intervals are provided in Appendix~\ref{app:mechanism-full}, separately from the history sweep. The log-loss analysis in Appendix~\ref{app:logloss} explains why a change in mass need not have the same benefit for every outcome.

\FloatBarrier
\subsection{RQ3: Does correction learning improve fresh generation?}
\label{sec:rq3}

We now move from outcome prediction to the coding agent's code-generating policy. Each evaluated program is freshly sampled after learning. Figure~\ref{fig:performance-summary}(c) presents four selected model--domain contrasts against the unchanged policy; Appendix~\ref{app:full-matrix} reports the complete twelve-cell evaluation.

\begin{figure}[htbp]
\centering
\includegraphics[width=\linewidth]{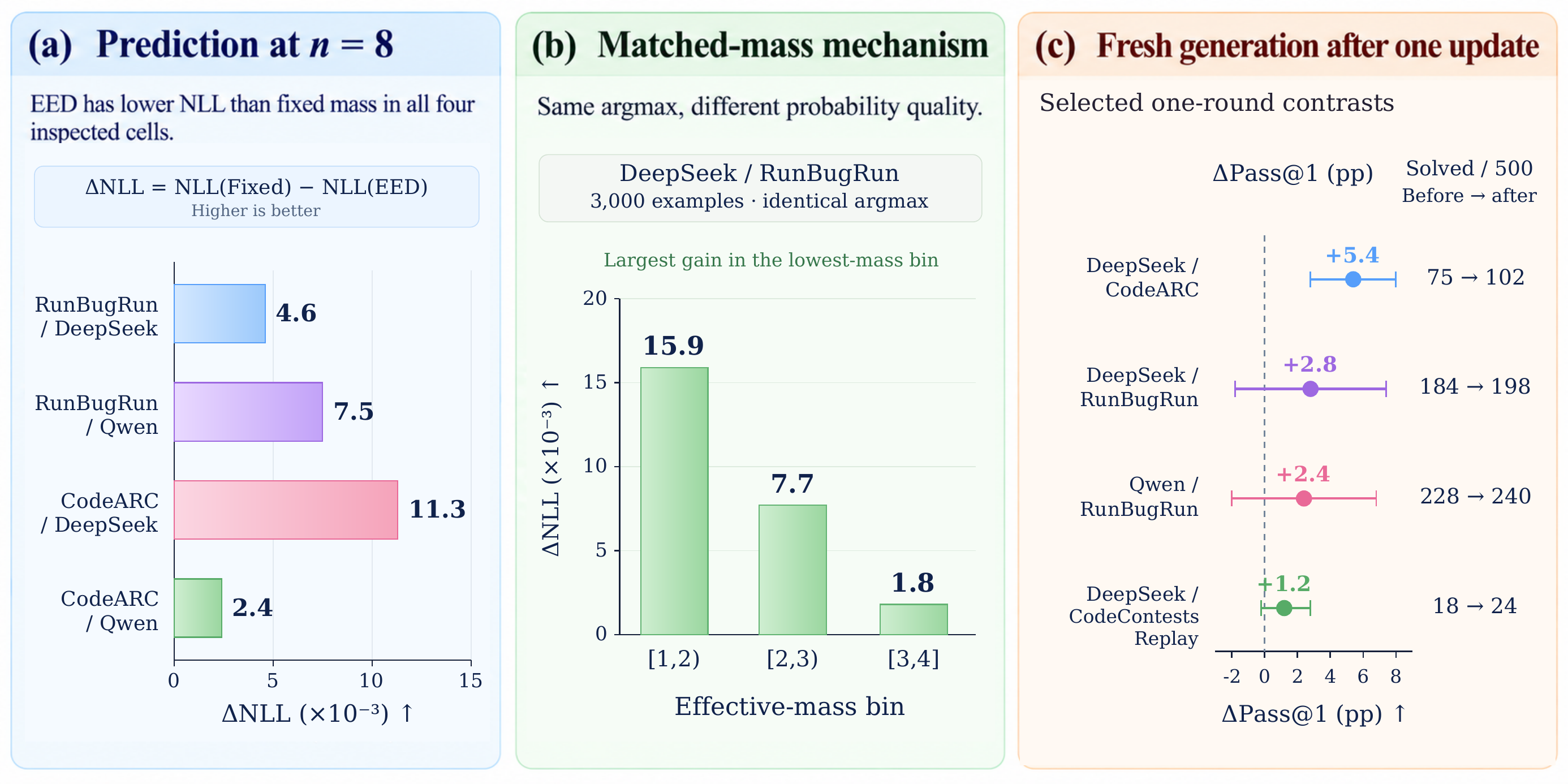}
\caption{\textbf{Prediction and fresh-generation performance.}
(a) Fixed-minus-EED NLL at $n=8$ in the four history-sweep cells.
(b) The matched DeepSeek/RunBugRun comparison by effective-mass bin; argmax agrees on all 3,000 examples. Both NLL panels use units of $10^{-3}$ and positive values favor EED.
(c) Four selected one-round contrasts against \texttt{no\_update}, with paired 95\% source-bootstrap intervals and solved counts out of 500. The complete model--domain evaluation is in Table~\ref{tab:full-downstream}.}
\label{fig:performance-summary}
\end{figure}

\paragraph{The clearest gain is on DeepSeek/CodeARC.}
All-tests Pass@1 rises from $75/500$ ($15.0\%$) to $102/500$ ($20.4\%$), a net increase of 27 solved sources. The paired improvement is $+5.4$ percentage points, with a 95\% source-bootstrap interval of $[+2.8,+8.0]$. Each solved program passes all ten evaluator tests. The comparison measures the effect of the complete correction-learning update.

\paragraph{Further gains appear on RunBugRun and CodeContests Replay.}
DeepSeek increases from $184/500$ to $198/500$ on RunBugRun ($+2.8$ points), and Qwen increases from $228/500$ to $240/500$ ($+2.4$ points). DeepSeek also increases from $18/500$ to $24/500$ on CodeContests Replay ($+1.2$ points). These correspond to 14, 12, and 6 additional solved sources in the respective aggregate counts. The paired intervals for these three contrasts span zero. Across all twelve settings, changes range from $-6.8$ to $+5.4$ points (Table~\ref{tab:full-downstream}), placing the DeepSeek/CodeARC gain within a model--domain-dependent response.

Together, the studies examine successive parts of the evidence-to-learning path: additional observations improve prediction, matched mass changes probability estimates without changing category ordering, and the complete weighted update yields positive fresh-generation point estimates in several model--domain settings. The downstream comparison evaluates the complete update; the matched-mass comparison isolates the probability estimator. 

\FloatBarrier
\section{Limitations}

Our evaluation covers executable coding tasks and one-round adaptation with a frozen lexical relevance kernel. Effective mass measures normalized relevance concentration; accounting for duplicated or dependent executions requires an additional dependence model. The uncertainty-penalized utility is a training score with no asserted confidence-coverage level. Downstream results compare the full update with the entering policy, so isolating the contributions of weighting and KL anchoring requires matched training controls. The reported source-bootstrap intervals condition on one fitting run per setting. Source-disjoint transfer and repeated RSI updates are further evaluation settings beyond the present protocol.

\section{Conclusion}
\label{sec:conclusion}

\method represents a correction's execution evidence through relative transition support and effective pseudo-count mass. A Dirichlet posterior maps this representation to an uncertainty-penalized supervised weight within KL-anchored self-distillation. The analysis establishes category-order invariance and the ordering of effective- and fixed-mass coefficients. Experiments show lower outcome NLL with longer histories, lower NLL than fixed mass in all four eight-observation comparisons, and a primary matched-mass gain at identical argmax predictions. One correction-learning round increases DeepSeek/CodeARC all-tests Pass@1 by $5.4$ percentage points. The full model--domain matrix defines the scope of this result. Together, these findings support explicit evidence-mass modeling when learning from a coding agent's self-corrections.

\label{pg:main-end}

\clearpage

\appendix

\section{Proofs and Interpretation of Effective Evidence}
\label{app:proofs}

\subsection{Pseudo-count construction and elementary properties}

\paragraph{Weighted-likelihood representation.}
Condition on the observed relevance weights $p$. Starting from a symmetric Dirichlet prior, the weighted likelihood
\begin{equation}
\widetilde L(\theta;h)
=\prod_{i=1}^{n}\theta_{z_i}^{E(p)p_i}
=\prod_{k=1}^{4}\theta_k^{E(p)r_k}
\end{equation}
gives
\begin{equation}
\widetilde p(\theta\mid h)
\propto
\prod_{k=1}^{4}\theta_k^{\alpha-1+E(p)r_k}.
\end{equation}
This is the density of $\mathrm{Dir}(\alpha\mathbf 1+E(p)r)$. This weighted likelihood provides an algebraic interpretation of the relevance-conditioned pseudo-count update.

\paragraph{Bounds and count recovery.}
Since $\sum_i p_i=1$ and $p_i\ge0$, Cauchy--Schwarz yields
\[
1=\left(\sum_i p_i\right)^2\le n\sum_i p_i^2,
\]
while $\sum_i p_i^2\le1$. Therefore $1\le E(p)\le n$. Equality at the upper bound requires uniform weights, for which $E(p)p_i=1$ and
\[
E(p)r_k=\sum_i\ind[z_i=k].
\]
Equality at the lower bound requires a point mass. Multiplying every $s_i$ by the same positive constant leaves $p$, $r$, $E(p)$, and the pseudo-count posterior unchanged.

\subsection{Posterior utility moments}

For a Dirichlet vector with mean $\mu$ and total concentration $\alpha_0$,
\[
\mathrm{Cov}(\theta)
=\frac{\mathrm{diag}(\mu)-\mu\mu^\top}{\alpha_0+1}.
\]
Thus $\E[v^\top\theta]=v^\top\mu$ and
\[
\mathrm{Var}(v^\top\theta)
=\frac{\sum_k\mu_kv_k^2-(v^\top\mu)^2}{\alpha_0+1}.
\]
Taking $v=(1,-1,0,0)$ gives Equations~\ref{eq:utility-mean} and~\ref{eq:utility-variance}. The training score $\bar u-\lambda\sigma_u$ combines these utility moments.

\subsection{Proof of Proposition~\ref{prop:direction-strength}}

\begin{proof}
Set $A=4\alpha$. The posterior mean is
\[
\mu_k(\tau)=\frac{\alpha+\tau r_k}{A+\tau}
=\frac{\tau}{A+\tau}r_k+\frac{A}{A+\tau}\frac14.
\]
For any categories $j,k$,
\[
\mu_j(\tau)-\mu_k(\tau)
=\frac{\tau(r_j-r_k)}{A+\tau}.
\]
Its sign is independent of $\tau>0$, proving invariance of category ordering, including ties.

To establish monotonicity of the training weight, write
\[
d=r_{\fix}-r_{\regress},
\qquad
b=r_{\fix}+r_{\regress}.
\]
Then
\begin{align}
\bar u_\tau&=\frac{\tau d}{A+\tau},\\
\sigma_\tau^2
&=\frac{(2\alpha+\tau b)(A+\tau)-\tau^2d^2}
{(A+\tau)^2(A+\tau+1)}.
\end{align}
If $d\le0$, then $\bar u_\tau\le0$ and $w_\tau=0$ for every $\tau$.

Suppose $d>0$. All Dirichlet parameters are positive, so $\sigma_\tau>0$. The ratio $T_\tau=\bar u_\tau/\sigma_\tau$ satisfies
\begin{equation}
T_\tau^2
=\frac{d^2(A+\tau+1)}
{b-d^2+(2\alpha+Ab)/\tau+2\alpha A/\tau^2}.
\label{eq:utility-snr}
\end{equation}
Because $|d|\le b\le1$, we have $b-d^2\ge0$. The numerator is strictly increasing in $\tau$, and the positive denominator is strictly decreasing. Therefore $T_\tau$ is strictly increasing. Since $\bar u_\tau$ is also positive and increasing,
\[
w_\tau
=\bar u_\tau\left[1-\frac{\lambda}{T_\tau}\right]_+
\]
is nondecreasing for every $\lambda\ge0$. Finally, $E(p)\le n$ gives $w_{E(p)}\le w_n$.
\end{proof}

The proposition characterizes the supervised coefficient under fixed $r$, $\alpha$, $\lambda$, and utility vector.

\subsection{When reducing evidence mass changes log loss}
\label{app:logloss}

For a predictive label set of size $K$, the symmetric-prior construction is
$\mu_k(\tau)=(\alpha+\tau r_k)/(K\alpha+\tau)$.
The difference $\mu_j-\mu_k=\tau(r_j-r_k)/(K\alpha+\tau)$ preserves
category ordering for every $K$. The category-order statement holds for any predictive label set.
The correction-utility result uses $K=4$ and $v=(1,-1,0,0)$.

For an observed transition $y$, define
\[
\ell_\tau(y)=-\log\mu_y(\tau)
=-\log\frac{\alpha+\tau r_y}{4\alpha+\tau}.
\]
Differentiation gives
\begin{equation}
\frac{\partial\ell_\tau(y)}{\partial\tau}
=\frac{\alpha(1-4r_y)}
{(\alpha+\tau r_y)(4\alpha+\tau)}.
\end{equation}
Consequently, when $E(p)<n$, reducing mass from $n$ to $E(p)$ lowers log loss exactly when $r_y<1/4$, increases it when $r_y>1/4$, and leaves it unchanged when $r_y=1/4$. The log-loss effect is therefore outcome-dependent: empirical gains reflect how relevance concentration aligns with errors in relative support.

\subsection{Interpretive limits of effective mass}

Consider replacing every observation by $m$ identical copies. Each copy then has normalized weight $p_i/m$, so
\[
E(p')=\left(\sum_i\sum_{\ell=1}^{m}(p_i/m)^2\right)^{-1}
=mE(p),
\qquad r'=r.
\]
Duplicating observations increases mass while leaving relative support fixed. Effective mass consequently summarizes normalized relevance concentration, while dependence and absolute score reliability are separate properties of the execution record.

\section{Execution Protocol and Implementation}
\label{app:architecture}
\label{app:protocol}

\subsection{Measurement protocols}

\begin{table}[H]
\centering
\small
\caption{\textbf{Evaluation protocols and outcomes.} Candidate selection, outcome prediction, and post-update generation use the protocols below.}
\setlength{\tabcolsep}{5pt}
\renewcommand{\arraystretch}{1.15}
\begin{tabular}{>{\raggedright\arraybackslash}p{0.20\linewidth}>{\raggedright\arraybackslash}p{0.40\linewidth}>{\raggedright\arraybackslash}p{0.30\linewidth}}
\toprule
Study & Recorded protocol & Reported outcome \\
\midrule
Candidate-bank audit & 500 source groups per benchmark; eight candidates and four public executions per group & Visible-selected and retrospective hidden-oracle success rates \\
History-size prediction & $n\in\{1,2,4,8\}$ visible observations; RunBugRun and CodeARC; DeepSeek and Qwen; primary run & Future-outcome NLL and categorical accuracy \\
Matched-mass prediction & DeepSeek/RunBugRun; same relative support and effective-arm settings within each contrast & NLL difference and argmax agreement; three repetitions \\
One-round learning & Three backbones and four domains; 500 primary sources per cell; one fitting run and one fresh candidate per source & Pass@1 requiring all ten evaluator tests; paired source-bootstrap interval \\
\bottomrule
\end{tabular}
\label{tab:protocols}
\end{table}

The retrospective oracle in Appendix~\ref{app:candidate-bank} selects within a fixed candidate bank using evaluator outcomes. The downstream reference is the unchanged entering policy. Outcome prediction is scored with NLL and categorical accuracy; post-update generation is scored with all-tests Pass@1. Evaluation uses the recorded source populations. The supplied aggregate reports leave the overlap between training and evaluator source populations unspecified.

\subsection{Randomness and uncertainty}
\label{app:randomness}

The recorded random-number-generator seeds are 1701 for the primary
history sweep, matched-mass comparison, and downstream evaluation, and
1702 and 1703 for the additional matched-mass repetitions. Each downstream
model--domain cell reports one fitting run. Its interval is a nominal
per-cell 95\% paired source-bootstrap interval conditional on the evaluated
policies. We use the recorded 10,000-draw intervals and report fitting-run
replication separately from source-level uncertainty.

\subsection{Relevance scoring and token-level objective}

Let $\Delta a$ be the unified diff between $a^0$ and $a^1$, and let $f$ be the frozen 256-dimensional signed-hash text encoder. Execution descriptor $e_i$ contains the input, expected output, actual output, stderr, and categorical outcome. The one-round correction scorer uses
\begin{equation}
s_i=\exp\!\left(\kappa\cos\bigl(f(\Delta a),f(e_i)\bigr)\right),
\qquad \kappa=16.
\label{eq:relevance-kernel}
\end{equation}
The scorer is frozen and uses public execution descriptors. Equation~\ref{eq:relevance-kernel} specifies the one-round correction scorer; the prediction studies retain their recorded matched-comparison settings.

For scored correction-response length $L_h$, let $c_{h,j}$ be the actual training prompt followed by $a^1_{<j}$. The two token-mean losses are
\begin{align}
\mathcal L_{\mathrm{CE}}(h)
&=-\frac{1}{L_h}\sum_{j=1}^{L_h}
\log\pi_{t+1}(a^1_j\mid c_{h,j}),\\
D_{\mathrm{KL}}^{(h)}(\pi_t\Vert\pi_{t+1})
&=\frac{1}{L_h}\sum_{j=1}^{L_h}
D_{\mathrm{KL}}\!\left(
\pi_t(\cdot\mid c_{h,j})\,\Vert\,
\pi_{t+1}(\cdot\mid c_{h,j})
\right).
\end{align}
Teacher and student use the same response prefixes. The execution record determines the scalar $w(h)$, and the training prompt supplies the token context. At $w(h)=0$, the objective retains its KL term.

\subsection{Optimization settings}

\begin{table}[H]
\centering
\small
\caption{\textbf{Recorded one-round training configuration.} GPU memory denotes nominal capacity.}
\setlength{\tabcolsep}{6pt}
\begin{tabular}{ll}
\toprule
Setting & Value \\
\midrule
Hardware & NVIDIA RTX A6000; 48\,GB per GPU \\
Student adaptation / base & LoRA only / frozen \\
Quantization & 4-bit NF4, double quantization \\
Compute dtype & bfloat16 \\
LoRA rank / scaling alpha / dropout & 16 / 32 / 0 \\
Optimizer / learning rate & AdamW / $2\times10^{-4}$ \\
Gradient accumulation / clipping & 16 / 1.0 \\
Maximum sequence length & 4608 \\
Relevance encoder / $\kappa$ & 256-dimensional signed hash / 16 \\
Dirichlet prior $\alpha$ & 0.1 \\
Utility / uncertainty coefficient & $(1,-1,0,0)$ / $\lambda=0.5$ \\
Forward-KL coefficient $\beta$ & 0.03 \\
Loss normalization & Response tokens \\
Training budget policy & Response-token accounting \\
\bottomrule
\end{tabular}
\label{tab:training-settings}
\end{table}

\subsection{One-correction algorithm}
\label{app:algorithm}

Given $(a^0,a^1)$ and their public before/after execution outcomes:
\begin{enumerate}
\item Compute $s_i$ using Equation~\ref{eq:relevance-kernel} and normalize $p_i=s_i/\sum_j s_j$.
\item Assign each outcome pair its transition $z_i$ and aggregate $r_k=\sum_i p_i\ind[z_i=k]$.
\item Set $E(p)=1/\sum_i p_i^2$ and $\alpha_k^{\mathrm{post}}=\alpha+E(p)r_k$.
\item Calculate $\bar u$ and $\sigma_u$ under $v=(1,-1,0,0)$, then set $w=[\bar u-\lambda\sigma_u]_+$.
\item Fit the LoRA student with $w\mathcal L_{\mathrm{CE}}+\beta D_{\mathrm{KL}}^{(h)}(\pi_t\Vert\pi_{t+1})$ while the entering-policy teacher is frozen.
\end{enumerate}

\section{Candidate-Bank Selection Diagnostic}
\label{app:candidate-bank}

For each of 500 source groups per benchmark, visible-score selection chooses among eight fixed candidates using four public executions. A \emph{retrospective hidden oracle} selects the candidate with the highest additional evaluator score in that bank. This evaluation-only diagnostic measures selection headroom among the generated candidates.

\begin{table}[H]
\centering
\small
\caption{\textbf{Selection headroom within a fixed candidate bank.} Both selectors operate on the same eight generated candidates; oracle selection uses evaluator outcomes.}
\setlength{\tabcolsep}{8pt}
\begin{tabular}{lccc}
\toprule
Benchmark & Visible selected (\%) & Hidden oracle (\%) & Gap (pp) \\
\midrule
CodeARC & 27.0 & 31.2 & +4.2 \\
RunBugRun & 71.4 & 75.4 & +4.0 \\
\bottomrule
\end{tabular}
\label{tab:bank-headroom}
\end{table}

The gaps quantify the selection headroom available within each fixed candidate bank.

\section{Complete One-Round Downstream Results}
\label{app:full-matrix}

\begin{table}[H]
\centering
\small
\caption{Complete one-round downstream matrix for the primary run. Each model--domain setting contains 500 sources and reports fresh all-tests Pass@1. Intervals are paired source-bootstrap 95\% intervals.}
\setlength{\tabcolsep}{4pt}
\begin{tabular}{llcccc}
\toprule
Benchmark & Backbone & \texttt{no\_update} $\uparrow$ & \method $\uparrow$ & \shortstack{$\Delta$ (pp)\\$\uparrow$} & \shortstack{95\% CI\\(pp)} \\
\midrule
\multirow{3}{*}{RunBugRun} & Qwen2.5 Coder 7B & 228/500 & 240/500 & +2.4 & [-2.0,+6.8] \\
 & DeepSeek Coder 6.7B & 184/500 & 198/500 & +2.8 & [-1.8,+7.4] \\
 & Gemma 3 4B & 142/500 & 132/500 & -2.0 & [-5.6,+1.4] \\
\midrule
\multirow{3}{*}{CodeARC} & Qwen2.5 Coder 7B & 85/500 & 83/500 & -0.4 & [-2.8,+2.0] \\
 & DeepSeek Coder 6.7B & 75/500 & \textbf{102/500} & \textbf{+5.4} & \textbf{[+2.8,+8.0]} \\
 & Gemma 3 4B & 59/500 & 56/500 & -0.6 & [-2.6,+1.4] \\
\midrule
\multirow{3}{*}{APPS Replay} & Qwen2.5 Coder 7B & 57/500 & 57/500 & +0.0 & [-1.8,+1.8] \\
 & DeepSeek Coder 6.7B & 23/500 & 17/500 & -1.2 & [-3.2,+0.8] \\
 & Gemma 3 4B & 36/500 & 2/500 & -6.8 & [-9.0,-4.8] \\
\midrule
\multirow{3}{*}{CodeContests Replay} & Qwen2.5 Coder 7B & 47/500 & 39/500 & -1.6 & [-3.2,+0.0] \\
 & DeepSeek Coder 6.7B & 18/500 & 24/500 & +1.2 & [-0.2,+2.8] \\
 & Gemma 3 4B & 31/500 & 4/500 & -5.4 & [-7.6,-3.2] \\
\bottomrule
\end{tabular}
\label{tab:full-downstream}
\end{table}

Table~\ref{tab:full-downstream} reports all twelve model--domain settings. DeepSeek has positive point estimates in three domains; Qwen improves on RunBugRun and matches its entering-policy solved count on APPS Replay. The full range of paired differences is $[-6.8,+5.4]$ percentage points. Intervals quantify source-level uncertainty for the primary fitting run.

\section{History-Size Mechanism Details}
\label{app:history-size}

\begin{table}[H]
\centering
\small
\caption{Primary history-size sweep with both fixed and effective mass. Values are future-outcome NLL; lower is better. Displayed precision is four decimals.}
\setlength{\tabcolsep}{3.5pt}
\begin{tabular}{llcccccccc}
\toprule
& & \multicolumn{2}{c}{$n=1$} & \multicolumn{2}{c}{$n=2$} & \multicolumn{2}{c}{$n=4$} & \multicolumn{2}{c}{$n=8$} \\
\cmidrule(lr){3-4}\cmidrule(lr){5-6}\cmidrule(lr){7-8}\cmidrule(lr){9-10}
Benchmark & Backbone & Fixed & EED & Fixed & EED & Fixed & EED & Fixed & EED \\
\midrule
RunBugRun & DeepSeek & .4696 & .4696 & .3656 & .3646 & .2703 & .2693 & .2160 & \textbf{.2114} \\
RunBugRun & Qwen & .4737 & .4737 & .3292 & .3291 & .2704 & .2704 & .2188 & \textbf{.2113} \\
CodeARC & DeepSeek & .7103 & .7103 & .5440 & .5440 & .4149 & .4149 & .3005 & \textbf{.2892} \\
CodeARC & Qwen & .7785 & .7785 & .6176 & .6176 & .4725 & .4727 & .3439 & \textbf{.3415} \\
\bottomrule
\end{tabular}
\label{tab:history-full}
\end{table}

Table~\ref{tab:history-full} supplies Figure~\ref{fig:history-scaling}. The history-size sweep and the matched-mass study in Appendix~\ref{app:mechanism-full} retain their respective recorded predictor settings. The DeepSeek/RunBugRun difference at $n=4$ is $0.0010$ in the sweep; the primary matched-mass contrast is $0.00609$. Displayed ties refer to the four-decimal reporting precision.

\begin{table}[H]
\centering
\small
\caption{\textbf{Categorical accuracy (\%) in the reported history-size sweep.} Values are reported to one decimal place. Accuracy is measured on future categorical outcomes.}
\setlength{\tabcolsep}{7pt}
\begin{tabular}{llrrrr}
\toprule
Benchmark & Backbone & $n=1$ & $n=2$ & $n=4$ & $n=8$ \\
\midrule
RunBugRun & DeepSeek 6.7B & 90.7 & 90.9 & 92.2 & 93.2 \\
RunBugRun & Qwen2.5 7B & 90.6 & 91.4 & 91.5 & 92.5 \\
CodeARC & DeepSeek 6.7B & 83.3 & 83.9 & 84.7 & 88.8 \\
CodeARC & Qwen2.5 7B & 80.5 & 81.0 & 83.7 & 86.6 \\
\bottomrule
\end{tabular}
\label{tab:history-accuracy}
\end{table}

\section{Matched-Mass Mechanism and Replication}
\label{app:mechanism-full}

\paragraph{Concentration bins.}
Table~\ref{tab:concentration} and Figure~\ref{fig:performance-summary}(b) report the same primary DeepSeek/RunBugRun comparison at matched effective-arm settings. The bins contain 3,000 examples in total, with identical argmax predictions between estimators. Positive NLL differences favor effective mass. The table reports the recorded bin-level point estimates.

\paragraph{Recorded repetitions.}
Table~\ref{tab:cross-seed-mechanism} gives all three recorded repetitions. Each contrast matches the effective-arm parameters between predictors. The table records the initialization-specific contrasts under the outcome-dependent log-loss characterization in Appendix~\ref{app:logloss}. Initialization identifiers are defined in Appendix~\ref{app:randomness}.

\begin{table}[H]
\centering
\small
\caption{DeepSeek/RunBugRun mechanism sensitivity across recorded initializations at the effective-arm parameters. Positive NLL gain favors effective evidence over fixed mass.}
\setlength{\tabcolsep}{7pt}
\begin{tabular}{ccc}
\toprule
Initialization & NLL gain $\uparrow$ & 95\% CI \\
\midrule
1701 & \textbf{+0.00609} & [+0.00211,+0.01075] \\
1702 & -0.00367 & [-0.00447,-0.00287] \\
1703 & +0.00067 & [+0.00023,+0.00118] \\
\bottomrule
\end{tabular}
\label{tab:cross-seed-mechanism}
\end{table}

\section{Derived Arithmetic and Paired-Count Interpretation}
\label{app:derived-summaries}

\subsection{Coding illustration and supervision weights}
\label{app:coding-illustration}
In Figure~\ref{fig:problem}, \texttt{sum(range(1, n))} gives $0,1,3,6$
on inputs $1,2,3,4$, while \texttt{sum(range(1, n+1))} gives
$1,3,6,10$. All transitions are \fix, so $r=(1,0,0,0)$ for either
stipulated relevance vector: $(0.25,0.25,0.25,0.25)$ or
$(0.97,0.01,0.01,0.01)$. The relevance vectors define the analytical
example independently of the lexical kernel. Their masses are $4$ and
$1/0.9412\approx1.062473$. With $\alpha=0.1$, $\lambda=0.5$, and
$v=(1,-1,0,0)$, Equations~\ref{eq:posterior}--\ref{eq:utility} give
$w\approx0.832081$ and $0.541946$. Fixed mass assigns $0.832081$ to
both. Targets and KL coefficients are held fixed
while the supervised coefficients change with effective mass.

\subsection{Reported mass fractions and paired counts}

For the four-observation matched comparison in Table~\ref{tab:concentration},
the retained pseudo-count fraction is $E(p)/4$. The three bin means therefore
retain $100(1.511/4)=37.775\%$, $100(2.521/4)=63.025\%$, and
$100(3.681/4)=92.025\%$ of fixed mass. The main text rounds these to one
decimal place. These ratios summarize retained pseudo-count mass.

For paired downstream outcomes, let $N_{01}$ count sources that fail under
\texttt{no\_update} and pass under \method, and $N_{10}$ count the reverse.
The difference in solved-source counts equals $N_{01}-N_{10}$, giving
\[
\Delta_{\mathrm{pp}}=100\frac{N_{01}-N_{10}}{N},\qquad N=500.
\]
Aggregate solved counts identify the net difference $N_{01}-N_{10}$.
The paired uncertainty intervals are taken from the recorded whole-source
bootstrap comparisons.

\end{document}